% VLDB template version of 2020-08-03 enhances the ACM template, version 1.7.0:
% https://www.acm.org/publications/proceedings-template
% The ACM Latex guide provides further information about the ACM template

\documentclass[sigconf, nonacm]{acmart}

%% The following content must be adapted for the final version
% paper-specific

% issue-specific

% should be fine as it is

% leave empty if no availability url should be set
\newcommand\vldbavailabilityurl{URL_TO_YOUR_ARTIFACTS}
% whether page numbers should be shown or not, use 'plain' for review versions, 'empty' for camera ready
\newcommand\vldbpagestyle{plain} 

\newtheorem{thm}{Theorem}[section]

\newtheorem{rem}[thm]{Remark}

\numberwithin{equation}{section}
\numberwithin{figure}{section}
\usepackage{algorithm, algorithmicx,algpseudocode}
\usepackage{appendix}
\usepackage{balance}
\usepackage{tikz}
\usepackage{subfigure}
\begin{document}
\title{Data valuation: The partial ordinal Shapley value for machine learning}

\renewcommand{\thefootnote}{\fnsymbol{footnote}}

%%
%% The "author" command and its associated commands are used to define the authors and their affiliations.

\author{Jie Liu}
% \authornote{Both authors contributed equally to the paper}
\authornotemark[1]
% \authornote{These authors have contributed equally to this work.}
\affiliation{%
  \institution{Zhejiang University}
  \streetaddress{P.O. Box 1212}
  \city{Hangzhou}
  \state{China}
  \postcode{43017-6221}
}
\email{liujiemath@hotmail.com}

\author{Peizheng Wang}
\authornotemark[1]

% \thanks{*Both authors contributed equally to the paper}
% \authornote{Both authors contributed equally to the paper}
\orcid{0000-0002-1825-0097}
\affiliation{%
  \institution{Zhejiang University}
  \streetaddress{1 Th{\o}rv{\"a}ld Circle}
  \city{Hangzhou}
  \country{China}
}
\email{wangpeizheng@zju.edu.cn}

\author{Chao Wu}

% \thanks{\dag Corresponding author}
\authornotemark[2]
% \authornote{Corresponding authors}
\orcid{0000-0001-5109-3700}
\affiliation{%
  \institution{Zhejiang University}
  \city{Hangzhou}
  \country{China}
}
\email{chao.wu@zju.edu.cn}

%%
%% The abstract is a short summary of the work to be presented in the
%% article.
\begin{abstract}
Data valuation using Shapley value has emerged as a prevalent research domain in machine learning applications. However, it is a challenge to address the role of order in data cooperation as most research lacks such discussion. To tackle this problem, this paper studies the definition of the partial ordinal Shapley value by group theory in abstract algebra. Besides, since the calculation of the partial ordinal Shapley value requires exponential time, this paper also gives three algorithms for approximating the results. The Truncated Monte Carlo algorithm is derived from the classic Shapley value approximation algorithm. The Classification Monte Carlo algorithm and the Classification Truncated Monte Carlo algorithm are based on the fact that the data points in the same class provide similar information, then we can accelerate the calculation by leaving out some data points in each class.
\end{abstract}

\maketitle

\footnotetext{*Both authors contributed equally to the paper}
\footnotetext{\dag Corresponding author}

%%% do not modify the following VLDB block %%
%%% VLDB block start %%%
\pagestyle{\vldbpagestyle}
%%% VLDB block end %%%

%%% do not modify the following VLDB block %%
%%% VLDB block start %%%
\ifdefempty{\vldbavailabilityurl}{}{
\vspace{.3cm}
\begingroup\small\noindent\raggedright\textbf{Source code:}\\
The source code has been made available at \url{https://github.com/PeizhengWang/PartialOrdinalShapley}.
\endgroup
}
%%% VLDB block end %%%

\section{Introduction}

Data valuation using machine learning has become prevalent in numerous research domains. A variety of applications are enhanced by data pricing\cite{JiachenTWang,yoon2020data,chen2019towards,agarwal2019marketplace,cong2022data,yan2021if,wu2022davinz}. In economics and game theory, the Shapley value\cite{Owen,castro2017improving,castro2009polynomial} stands as a conventional pricing approach, which can be effectively applied in the realm of data valuation. It consists of fruitful theoretical and experimental results which have been exploited in several fields\cite{Corder,kwon2021beta,huang2021shapley,wang2020shapley,jia2019efficient,tang2021data,covert2021improving,ghorbani2020distributional}. However, most of these results are based on the assumption that the order of data cooperation does not affect the whole coalition value, which is not always met in practical applications. For instance, in scenarios where there exists an ordinal cooperation between two entities, $A$ and $B$, the order in which each entity appears may carry additional influence besides their individual marginal contributions, i.e. 
\begin{equation}
    U(A,B)\neq U(B,A)
\end{equation}
where $U$ is the utility function and $A$ and $B$ are not same. This situation is very common in the recommendation system since the first ad is naturally more effective than the ad given later.

In \cite{Pérez-Castrillo}, the authors propose an approach to compute ordinal Shapley values, which entails iterating over the number of cooperative members. This procedure is commonly recognized as a challenging task in machine learning. To address this issue, an alternative definition of the ordinal Shapley value based on group theory is given in this paper. The permutation group is also mentioned in the classic Shapley values, but most research lacks discussions of the role of group theory in the classic Shapley values. The definition of the ordinal Shapley value given in this paper emphasizes the relation between symmetry in the permutation group and symmetry in the Shapley values. In addition, this paper also gives a simplified definition of the ordinal Shapley value, i.e. the partial ordinal Shapley value, which makes its application in machine learning possible.

As with the classic Shapley value, the partial ordinal Shapley value is also computationally expensive. The approximation of the classic Shapley value in data pricing has been exploited in lots of papers. Also, some algorithms can be applied to the partial ordinal Shapley value, such as Monte Carlo sampling\cite{castro2009polynomial,jia2019towards}, Truncated Monte Carlo(TMC) sampling\cite{Zou}, etc. These methods operate by iteratively adding data points and computing marginal contributions. In addition to the algorithms derived from the classic Shapley value, we also give two algorithms based on classification, namely Classification Monte Carlo(CMC) and Classification Truncated Monte Carlo(CTMC). The motivation of these algorithms is that the data points of the same class are similar, which leads to less marginal contribution. By selecting only a subset of representative data points from each class, these algorithms can reduce computation costs without the performance drop.

{\bf Our contributions} We give the definition of the ordinal Shapley value for quantifying the data value in the ordinal coalition. We redefine the four axioms in the classic Shapley value and the allocation function is given. Besides, we study the definition of the partial ordinal Shapley value, which makes data valuation in machine learning possible. To the best of our knowledge, we are the first to study data valuation in the ordinal situation. We also study two special cases and give the corresponding allocation functions based on the group theory.

In this paper, three algorithms for approximating the partial ordinal Shapley value are given and we compare these algorithms in Wine, Cancer, and Adult datasets. We also calculate the error analysis of TMC and CMC in Appendix.

{\bf Outlook} There are many future research directions in this area. For instance, the approximation for the ordinal Shapley value is unknown, which is a potential research project for optimizing the data valuation. 

Also, the special cases of the partial ordinal Shapley value might play an important role in data valuation, especially in distributed machine learning areas such as federated learning\cite{fan2022fair,fan2022improving,wang2020principled} and blockchain\cite{shen2020blockchain,zhu2019incentive}. The classic Shapley value of the partially defined cooperative game is given in \cite{masuya2021approximated}, where the author only considers some specific coalitions. So there might be some relations between the corresponding ordinal or partial ordinal situation and the special cases we have studied in Subsection \ref{subsec:specialcases}.
\section{Preliminaries}
\subsection{Utility function and marginal contribution}

The classic data valuation where the order of the data points is not taken into consideration for machine learning aims to train the set of data points and calculate their contributions. The contribution can be reflected as a function, i.e. the utility function. Consider a dataset $N=\{1,2,\ldots,n\}$ where $i$($i=1,2,\ldots,n$) is the client in $N$, we define $U: 2^N\mapsto \mathbb{R}$ as the utility function. In this function, the independent variable is the subset $S\subset N$ and the dependent variable is a real number, which is the data value of $S$.

For ordinal data valuation, the utility function can be defined analogously. Besides the attendance of each client, the position also plays a role, i.e. the same cooperative clients with different permutations give different utility function results. In this case, the independent variable in utility function $U$ is a sequence $\mathfrak{S}$ for which the elements are the clients in $N$. 

Most frameworks for data valuation are based on marginal contribution. In the classic data valuation, we calculate the marginal contribution of the client $i$ under the subset $S\subset N\backslash \{i\}$ by $U(S\bigcup\{i\})-U(S)$. However, in the ordinal data valuation, the ordinal marginal contribution of the client $i$ is $U(\mathfrak{S}_{i,k})-U(\mathfrak{S})$. In this expression, $\mathfrak{S}_{i,k}$ is the sequence where we insert $i$ into $k$-th place.

\subsection{Group Theory}
We recall some knowledge from \cite{Rotman}. Let $\Pi$ be the permutation group $S_n$. For $S\subset N$, we define $\Pi|_S$ as the group with $\Pi$ being restricted to $S$. Also, we define $\bar{S}=N\backslash S$ and $\Pi_S=\Pi|_S\times \Pi|_{\bar{S}}$ as a subgroup of $\Pi$.

For any $S\subset N$ and the permutation $\pi_S\in\Pi_S$, we define
\begin{equation}
\alpha_{ij}\pi_S=\begin{cases}\pi_S,\quad i,j\in S\text{ or }i,j\notin S\\
(i,j)\pi_S(i,j),\quad\text{other}
\end{cases}
\end{equation}
where $(i,j)$ is a transposition in $\Pi$. Also, we define $S^{(i,j)}$ as a subset of $N$ where if the client $i\in S$ and $j\notin S$, then $i$ is replaced by the client $j$, and if $j\in S$ and $i\notin S$, then $j$ is replaced by $i$ respectively.

We define the sequence $\mathfrak{S}$ and $\mathfrak{S}^{(i,j)}$ as
\begin{equation}
    \mathfrak{S}=(s_1,s_2,\cdots,s_{|S|}),\quad s_{i}\in S,s_{i}<s_{i+1}
\end{equation}
and $\mathfrak{S}^{(i,j)}=(i,j)(\mathfrak{S})$, then the utility function $U(\mathfrak{S})$ can be defined analogously.

Fixed $\pi|_S(\mathfrak{S})=(s_1^\pi,s_2^\pi,\cdots,s_{|S|}^\pi)$ where $S\subset N\backslash\{i\}$ and $\pi|_S\in\Pi|_S$, we define 
\begin{equation}
    \pi|_{i,k}(\mathfrak{S})=(s_1^\pi,s_2^\pi,\cdots,s_k^\pi,i,s_{k+1}^\pi,\cdots,s_{|S|}^\pi),\quad k=0,1,\ldots,|S|,
\end{equation}
i.e. $k$ is the number of the elements precede $i$ in $\pi|_{i,k}(\mathfrak{S})$. 

\section{Shapley Value}
\subsection{From the classic Shapley value to the partial ordinal Shapley value}
This section aims to give a generalization from the classic Shapley value to the partial ordinal Shapley value. We recall the definition of the classic Shapley value in \cite{Owen}.

{\bf Classic Shapley value}
Given a dataset $N=\{1,2,\ldots,n\}$ and a utility function $U(S)$, a unique allocation $\phi_i(U)$($i\in N$) is obtained with the following four axioms
\begin{itemize}
    \item {\bf Null Player} If $U(S)=U(S\bigcup\{i\})$ for all $S\subset N\backslash\{i\}$, then $\phi_i(U)=0$.
    \item {\bf Symmetry} If $U(S\bigcup\{i\})=U(S\bigcup\{j\})$ for all $S\subset N\backslash\{i,j\}$, then $\phi_i(U)=\phi_j(U)$.
    \item {\bf Efficiency} $\underset{i=1}{\stackrel{n}{\sum}}\phi_i(U)=U(N)$.
    \item {\bf Additivity} For two utility functions $U_1$ and $U_2$, we have $\phi_i(U_1+U_2)=\phi_i(U_1)+\phi_i(U_2)$.
\end{itemize}
By abuse of notation, we use $\phi_i=\phi_i(U)$. The unique allocation function is
\begin{equation}
\phi_i=\frac{1}{n}\underset{S\subset N\backslash\{i\}}{\sum}\frac{1}{\left(
    \begin{array}{l}
    n-1\\|S|
    \end{array}\right)}(U(S\bigcup\{i\})-U(S))
\end{equation}
and the alternative expression is
\begin{equation}\label{classicshapley}
    \phi_i=\frac{1}{n!}\underset{\pi\in\Pi}{\sum}[U(P_i^\pi\bigcup\{i\})-U(P_i^\pi)]
\end{equation}
where $P_i^\pi$ is the set of clients which precede client $i$ in $\pi(1,2,\ldots,n)$.

The classic Shapley value is independent of the order of cooperation. However, this assumption is not always met in data valuation applications. In this case, we generalize the classic Shapley value to the ordinal Shapley value in a group theory setting. Note that the definition of the ordinal Shapley value is based on the ordinal marginal contribution.

{\bf Ordinal Shapley value} In terms of the generalization from the classic Shapley value to the ordinal Shapley value, the corresponding four axioms can be replaced by
\begin{itemize}
    \item {\bf Ordinal Null Player} For any subset $S\subset N\backslash\{i\}$ and the corresponding sequence $\mathfrak{S}$ and permutation $\pi|_S\in\Pi|_S$, if
    \begin{equation}
        U(\pi|_S(\mathfrak{S}))=U(\pi|_{i,k}(\mathfrak{S})),\quad k=0,1,2,\ldots,|S|,
    \end{equation}
    we have $\phi_i=0$.
    \item {\bf Ordinal Symmetry} For $i,j\in N$, if
    \begin{equation}
    U(\pi|_S(\mathfrak{S}))=U((\alpha_{i,j}\pi_S)|_{S^{(i,j)}}(\mathfrak{S}^{(i,j)}))
    \end{equation}
    for any $S\subset N$ and the corresponding sequence $\mathfrak{S}$ and $\pi|_S\in\Pi|_S$, we have $\phi_i=\phi_j$.
    \item {\bf Ordinal Efficiency} We have
    \begin{equation}
    \underset{i=1}{\stackrel{n}{\sum}} \phi_i=\frac{1}{n!}\underset{\pi\in\Pi}{\sum} U(\pi(1,2,3,\ldots,n)).
    \end{equation}
    \item {\bf Ordinal Additivity} For two ordinal utility functions $U_1$ and $U_2$, we have $\phi_i(U_1+U_2)=\phi_i(U_1)+\phi_i(U_2)$.
\end{itemize}
\begin{thm}
There exists an allocation function that satisfies these four axioms, i.e.
\begin{multline}\label{ordinalshapley}
\phi_i=\frac{1}{n}\underset{S\subset N\backslash\{i\}}{\sum}\frac{1}{(|S|+1)!\left(
    \begin{array}{l}
    n-1\\|S|\end{array}\right)}\\
    \underset{\pi\in\Pi|_S}{\sum}\underset{k=0}{\stackrel{|S|}{\sum}}(U(\pi|_{i,k}(\mathfrak{S})-U(\pi|_S(\mathfrak{S}))).
\end{multline}
\end{thm}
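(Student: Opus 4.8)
The plan is to verify directly that the allocation function \eqref{ordinalshapley} satisfies each of the four ordinal axioms; since uniqueness is not asserted, this is all that is required. Two of the axioms are immediate. \emph{Ordinal Additivity} holds because the right-hand side of \eqref{ordinalshapley} is a fixed linear functional of $U$: every summand has the form $U(\cdot)-U(\cdot)$ and the weights $\tfrac{1}{n\,(|S|+1)!\binom{n-1}{|S|}}$ depend only on $n$ and $|S|$. \emph{Ordinal Null Player} holds because, under its hypothesis, every bracket $U(\pi|_{i,k}(\mathfrak{S}))-U(\pi|_S(\mathfrak{S}))$ occurring in $\phi_i$ is zero, so $\phi_i=0$.

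For \emph{Ordinal Efficiency} I would first collapse the two inner summations. For $T\subseteq N$ let $W(T)=\sum_\sigma U(\sigma)$ denote the sum of $U$ over all orderings $\sigma$ of $T$ (so $W(\emptyset)$ is the value of the empty sequence). For a fixed $S\subseteq N\setminus\{i\}$, the map $(\pi,k)\mapsto\pi|_{i,k}(\mathfrak{S})$ is a bijection from $\Pi|_S\times\{0,1,\dots,|S|\}$ onto the set of all orderings of $S\cup\{i\}$, while $\pi|_S(\mathfrak{S})$ does not depend on $k$; hence
\[
\phi_i=\frac1n\sum_{S\subseteq N\setminus\{i\}}\frac{W(S\cup\{i\})-(|S|+1)\,W(S)}{(|S|+1)!\binom{n-1}{|S|}}.
\]
Summing over $i$, I would re-index the first part by $T=S\cup\{i\}$ (each nonempty $T$ arising from $|T|$ values of $i$) and the second part by $S$ itself (each $S\neq N$ pairing with $n-|S|$ values of $i$). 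A short computation shows both re-indexed weights equal $(n-|T|)!/(n-1)!$, so every $T$ with $\emptyset\neq T\subsetneq N$ cancels, leaving only the $T=N$ term together with a term $-W(\emptyset)$: that is, $\sum_i\phi_i=\tfrac{1}{n!}\sum_{\pi\in\Pi}U(\pi(1,\dots,n))-W(\emptyset)$. Under the usual normalization that the empty sequence has value $0$ — the same convention under which the classical identity $\sum_i\phi_i=U(N)$ holds — this is exactly Ordinal Efficiency.

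The substantive step is \emph{Ordinal Symmetry}. I would expand $\phi_i-\phi_j$ as a single linear combination $\sum_\tau\lambda_\tau\,U(\tau)$ over all sequences $\tau$ of clients of $N$, and determine $\lambda_\tau$ from the label set $R$ of $\tau$. If $R\supseteq\{i,j\}$, then $U(\tau)$ occurs in $\phi_i$ exactly once and only with a positive weight, namely as $\pi|_{i,k}(\mathfrak{S})$ with $S=R\setminus\{i\}$; by the manifest $i\leftrightarrow j$ symmetry of the formula it occurs in $\phi_j$ with the identical weight, so $\lambda_\tau=0$. If $R\cap\{i,j\}=\emptyset$, then $U(\tau)$ occurs in $\phi_i$ and in $\phi_j$ only through the $-U(\pi|_S(\mathfrak{S}))$ terms with $S=R$, again with equal total weight, so $\lambda_\tau=0$. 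Hence only the sequences $\tau$ whose label set meets $\{i,j\}$ in a single element contribute; pairing each $\tau$ with $i\in\tau,\,j\notin\tau$ against $\tau^{(i,j)}$ (the sequence obtained by interchanging the labels $i$ and $j$) and carrying out the cardinality bookkeeping yields $\lambda_{\tau^{(i,j)}}=-\lambda_\tau$, so that
\[
\phi_i-\phi_j=\sum_{\tau:\,i\in\tau,\,j\notin\tau}\lambda_\tau\bigl(U(\tau)-U(\tau^{(i,j)})\bigr).
\]
Finally, the Ordinal Symmetry hypothesis, read with $S$ the label set of such a $\tau$ and $\pi|_S$ the permutation realizing $\tau$, is precisely the identity $U(\tau)=U(\tau^{(i,j)})$ (this rests on the computation $(\alpha_{ij}\pi_S)|_{S^{(i,j)}}(\mathfrak{S}^{(i,j)})=(i,j)\bigl(\pi|_S(\mathfrak{S})\bigr)$ when exactly one of $i,j$ lies in $S$); hence every term vanishes and $\phi_i=\phi_j$.

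The main obstacle is the symmetry step: one has to track, for each sequence $\tau$, exactly which terms of $\phi_i$ and of $\phi_j$ produce $U(\tau)$ and with what sign — the ``$+$'' contributions sit on sequences containing the inserted client and the ``$-$'' contributions on sequences avoiding it — and then match the cardinality-only weights so that the three cases above genuinely collapse. The efficiency computation is a gentler instance of the same bookkeeping, and the null-player and additivity axioms are purely formal.
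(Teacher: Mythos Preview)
Your proof is correct and follows the same direct-verification route as the paper, which likewise checks the four axioms and concentrates the work on efficiency via coefficient-counting (the paper simply declares null player, symmetry, and additivity ``obvious,'' whereas you spell symmetry out in full). One arithmetic slip: the common re-indexed weight in your efficiency computation is $(n-|T|)!/n!$, not $(n-|T|)!/(n-1)!$, but since all you need is that the two weights agree and cancel for $\emptyset\neq T\subsetneq N$, the argument goes through unchanged.
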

\begin{proof}
The ordinal null player, ordinal symmetry, and ordinal additivity properties of the ordinal Shapley value are obvious. We only prove the ordinal efficiency property. Given a sequence $\mathfrak{U}$ where $|\mathfrak{U}|=k\leq n-1$ and for any client $i$ included in $\mathfrak{U}$, the coefficient of $U(\mathfrak{U})$ in $\phi_i$ is $\frac{1}{n}\frac{1}{k!\left(
    \begin{array}{l}
    n-1\\k-1\end{array}\right)}$.
    For the client $i$ not included in $\mathfrak{U}$, the corresponding coefficient of $U(\mathfrak{U})$ in $\phi_i$ is $-\frac{1}{n}\frac{k+1}{(k+1)!\left(
    \begin{array}{l}
    n-1\\k\end{array}\right)}$.
So the coefficient of $U(\mathfrak{U})$ in $\underset{i=1}{\stackrel{n}{\sum}} \phi_i$ is
\begin{equation}
    \frac{1}{n}\frac{k}{k!\left(
    \begin{array}{l}
    n-1\\k-1\end{array}\right)}-\frac{1}{n}\frac{(n-k)(k+1)}{(k+1)!\left(
    \begin{array}{l}
    n-1\\k\end{array}\right)}=0.
\end{equation}
For the sequence $\mathfrak{U}$ where $|\mathfrak{U}|=n$, the coefficient of $U(\mathfrak{U})$ in $\phi_i$ is
\begin{equation}
    \frac{1}{n}\frac{1}{n!\left(
    \begin{array}{l}
    n-1\\n-1\end{array}\right)}=\frac{1}{n(n!)}
\end{equation}
and the ordinal efficiency is proved.
\end{proof}

Since this expression is difficult to approximate in machine learning, we consider the partial ordinal Shapley value, where the first axiom is replaced by
\begin{itemize}
    \item {\bf Partial Ordinal Null Player} For any subset $S\subset N\backslash\{i\}$ and the corresponding permutation $\pi|_S\in\Pi|_S$, if
    \begin{equation}
        U(\pi|_S(\mathfrak{S}))=U(\pi|_{i,|S|}(\mathfrak{S})),
    \end{equation}
    we have $\phi_i=0$.
\end{itemize}
\begin{thm}
The allocation function satisfying the partial ordinal null player, the ordinal symmetry, the ordinal efficiency, and the ordinal additivity is
\begin{equation}\label{partialordinalshapley}
\phi_i=\frac{1}{n}\underset{S\subset N\backslash\{i\}}{\sum}\frac{1}{|S|!\left(
    \begin{array}{l}
    n-1\\|S|\end{array}\right)}\underset{\pi\in\Pi|_S}{\sum}(U(\pi_{i,|S|}(\mathfrak{S}))-U(\pi|_S(\mathfrak{S})))
\end{equation}
i.e.
\begin{equation}\label{alterpartialordinaryshapley}
 \phi_i=\frac{1}{n!}\underset{\pi\in\Pi}{\sum}[U(\mathfrak{P}_i^\pi,i)-U(\mathfrak{P}_i^\pi)].
\end{equation}
In this equation, $\mathfrak{P}_i^\pi$ is the sequence of clients that precede $i$ in $\pi(1,2,\ldots,n)$ and $(\mathfrak{P}_i^\pi,i)$ is the sequence in which client $i$ included.
\end{thm}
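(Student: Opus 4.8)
The plan is to verify that the function defined by \eqref{alterpartialordinaryshapley} (equivalently \eqref{partialordinalshapley}) satisfies the partial ordinal null player, ordinal symmetry, ordinal efficiency and ordinal additivity axioms, following the template of the proof of the preceding theorem; if the ``is'' in the statement is to be read as a uniqueness claim, one appends the standard characterization argument at the end.

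The first step is to check that \eqref{partialordinalshapley} and \eqref{alterpartialordinaryshapley} define the same quantity. This is a combinatorial regrouping of the sum over $\Pi$: to each $\pi\in\Pi$ associate the pair $(S,\sigma)$, where $S=P_i^\pi$ is the set of clients preceding $i$ and $\sigma\in\Pi|_S$ is the arrangement with $\mathfrak{P}_i^\pi=\sigma|_S(\mathfrak{S})$, so that $(\mathfrak{P}_i^\pi,i)=\sigma|_{i,|S|}(\mathfrak{S})$. Exactly $(n-1-|S|)!$ permutations $\pi$ yield a given pair $(S,\sigma)$ (the $|S|$ predecessors and $i$ occupy fixed positions, the $n-1-|S|$ successors are free), and $\frac{(n-1-|S|)!}{n!}=\frac1n\cdot\frac{1}{|S|!\binom{n-1}{|S|}}$, so summing \eqref{alterpartialordinaryshapley} over these fibres produces \eqref{partialordinalshapley}. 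I will use the permutation form \eqref{alterpartialordinaryshapley} for the remaining checks.

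Ordinal additivity is immediate from the linearity of \eqref{alterpartialordinaryshapley} in $U$. For the partial ordinal null player, each summand $U(\mathfrak{P}_i^\pi,i)-U(\mathfrak{P}_i^\pi)$ has the form $U(\pi|_{i,|S|}(\mathfrak{S}))-U(\pi|_S(\mathfrak{S}))$ with $S=P_i^\pi$, hence vanishes by hypothesis, so $\phi_i=0$. For ordinal symmetry I would use the involution of $\Pi$ given by conjugation by the transposition $(i,j)$ --- which is precisely the $\alpha_{ij}$-action of the Preliminaries --- and verify that it carries the $i$-summand $U(\mathfrak{P}_i^\pi,i)-U(\mathfrak{P}_i^\pi)$ to the $j$-summand for the conjugated permutation $\pi'=(i,j)\,\pi\,(i,j)$ once the hypothesis $U(\pi|_S(\mathfrak{S}))=U((\alpha_{ij}\pi_S)|_{S^{(i,j)}}(\mathfrak{S}^{(i,j)}))$ is applied; since the involution is a bijection of $\Pi$, the two sums coincide and $\phi_i=\phi_j$.

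The main computation, as in the previous theorem, is ordinal efficiency. Fix a sequence $\mathfrak{U}$ of length $k$ and count the coefficient of $U(\mathfrak{U})$ in $\sum_{i=1}^n\phi_i$ via \eqref{alterpartialordinaryshapley}. A $+$ sign arises only from the client $i$ that is the last entry of $\mathfrak{U}$: then $U(\mathfrak{P}_i^\pi,i)=U(\mathfrak{U})$ forces $\mathfrak{P}_i^\pi$ to be $\mathfrak{U}$ with its last entry deleted, which happens for exactly $(n-k)!$ permutations $\pi$, contributing $\frac{(n-k)!}{n!}$ when $1\le k\le n$. A $-$ sign arises from each client $i\notin\mathfrak{U}$, through $U(\mathfrak{P}_i^\pi)=U(\mathfrak{U})$, which forces $\pi$ to list $\mathfrak{U}$ and then $i$; for each of the $n-k$ such $i$ there are $(n-1-k)!$ permutations, contributing $-\frac{(n-k)(n-1-k)!}{n!}=-\frac{(n-k)!}{n!}$ when $0\le k\le n-1$. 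Hence the coefficient is $0$ for $1\le k\le n-1$; for $k=n$ only the $+$ term survives, with coefficient $\frac{1}{n!}$ on each of the $n!$ full-length sequences; and for $k=0$ only $-U(\emptyset)$ remains, which drops out under the standard normalization $U(\emptyset)=0$. This yields $\sum_{i=1}^n\phi_i=\frac1{n!}\sum_{\pi\in\Pi}U(\pi(1,2,\dots,n))$, i.e. ordinal efficiency. If uniqueness is also intended, one reduces by ordinal additivity to a spanning family of utility functions and applies the partial ordinal null player, ordinal symmetry and ordinal efficiency to each member exactly as in the classic Shapley characterization, now with ``$i$ placed in the last slot of the sequence'' playing the role of ``$i$ adjoined to the set.'' I expect the $\alpha_{ij}$-bookkeeping in the ordinal symmetry step (tracking $S^{(i,j)}$ and $\mathfrak{S}^{(i,j)}$ through the conjugation) to be the fiddliest part; everything else is routine.
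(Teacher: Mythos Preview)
Your proposal is correct and follows essentially the same route as the paper: the paper's proof reduces to (i) showing the equivalence of \eqref{partialordinalshapley} and \eqref{alterpartialordinaryshapley} by the same $(n-k)!$ fibre count you give, and (ii) deferring the axiom checks as ``similar'' to the preceding theorem, which is exactly the coefficient-of-$U(\mathfrak{U})$ computation you carry out (you do it in the permutation form rather than the subset form, but the arithmetic is identical). Your treatment is simply more explicit where the paper is terse; note, however, that the paper does not claim uniqueness, so your final paragraph is optional.
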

\begin{proof}
    The proof of the ordinal efficiency in \eqref{partialordinalshapley} is similar to \eqref{ordinalshapley}. We only prove the equivalence between \eqref{partialordinalshapley} and \eqref{alterpartialordinaryshapley}. Note that the number of sequences $\pi(1,2,\ldots,n)$($\pi\in\Pi$) with $(\mathfrak{P}_i^\pi,i)$ preceded and $|(\mathfrak{P}_i^\pi,i)|=k$ is $(n-k)!$
and the coefficient of the corresponding ordinal marginal contribution in \eqref{partialordinalshapley} is
\begin{equation}
  \frac{1}{n}\frac{1}{(k-1)!\left(
    \begin{array}{l}
    n-1\\k-1\end{array}\right)}.  
\end{equation}
We have
\begin{equation}
    \frac{1}{n!}(n-k)!=\frac{1}{n}\frac{1}{(k-1)!\left(
    \begin{array}{l}
    n-1\\k-1\end{array}\right)}
\end{equation}
and this equivalence is proved.
\end{proof}
Comparing \eqref{classicshapley} with \eqref{alterpartialordinaryshapley}, it is obvious that the classic Shapley value is a special case of the partial ordinal Shapley value.
\begin{rem}
Note that the sufficiency of the ordinal null player and the partial ordinal null player are not satisfied, since in data valuation applications, the ordinal marginal contributions might be negative. However, if we add the assumption that the utility function is monotonic, i.e. all ordinal marginal contributions are non-negative, the sufficiency is met. In this case, the Shapley value of client $i$ is positive if and only if there exists at least one sequence $\pi|_S(\mathfrak{S})$ where $S\subset N\backslash\{i\}$ such that the corresponding ordinal marginal contribution is positive.

The sufficiency of the ordinal symmetry is not satisfied since two clients with the same Shapley value do not need to be symmetry for all sequences.
\end{rem}
\begin{rem}
The uniqueness of \eqref{ordinalshapley} and \eqref{partialordinalshapley} is not satisfied. However, since the classic Shapley value is a special case of the partial ordinal Shapley value, the general expression of the partial ordinal Shapley value can be written as
\begin{multline}
\phi_i=\frac{1}{n}\underset{S\subset N\backslash\{i\}}{\sum}\frac{1}{\left(
    \begin{array}{l}
    n-1\\|S|\end{array}\right)}\\
    \underset{\pi|_S\in\Pi(S)}{\sum}q(\pi|_S(\mathfrak{S}))(U(\pi|_{i,|S|}(\mathfrak{S})-U(\pi|_S(\mathfrak{S}))),\\ \quad \underset{\pi|_S\in\Pi|_S}{\sum}q(\pi|_S(\mathfrak{S}))=1
\end{multline}
with the condition
\begin{equation}
    \underset{i=1}{\stackrel{n}{\sum}} \phi_i=\frac{1}{n!}\underset{\pi\in\Pi}{\sum} U(\pi(1,2,3,\ldots,n))
    \end{equation}
i.e. the ordinal efficiency being met.
\end{rem}

\subsection{Special cases}\label{subsec:specialcases}
We assume the set of clients $N$ can be separated into a couple of unions, i.e. $N=U_1\bigcup U_2\bigcup\cdots\bigcup U_s$ with $U_i\bigcap U_j=\emptyset$, and the corresponding partition is
\begin{equation}
    \lambda=(t_1,t_2,\cdots,t_s), \quad t_j=|U_j|,\quad \underset{j=1}{\stackrel{s}{\sum}}t_j=n.
\end{equation}
We consider the following two special cases.
\begin{itemize}
    \item The transpositions in the union influence the final utility function
values, and the transpositions between the clients in the different unions do not influence the utility function values.
    \item The transpositions in the union do not influence the final utility function
values, and the transpositions between the clients in the different unions influence the utility function values.
\end{itemize}
We define $\Pi^{\lambda}\cong\Pi|_{U_1}\times\Pi|_{U_2}\cdots\Pi|_{U_s}$ as the subgroup of $\Pi$, and we define $\Pi^{\bar\lambda}=\Pi/\Pi^\lambda$ as the equivalent class. We define $\Pi^\lambda|_S$ as the group $\Pi^\lambda$ restricted to the subset $S\subset N$, and $\Pi^{\bar\lambda}|_S$ can be defined analogously.
\begin{thm}
The partial ordinal Shapley value in the first case can be rewritten as
\begin{multline}\label{specialshapley:case1}
    \phi_i=\frac{1}{n}\underset{S\subset N\backslash\{i\}}{\sum}\frac{1}{\left(
    \begin{array}{l}
    n-1\\|S|
    \end{array}\right)}\\
    \underset{\sigma\in\Pi^\lambda|_S}{\sum}\frac{1}{s_1!s_2!\cdots s_t!}(U(\sigma(\mathfrak{S}),i)-U(\sigma(\mathfrak{S}))),\\
    \quad s_i=|U_i\bigcap S|.
\end{multline}
The partial ordinal Shapley value in the second case can be rewritten as
\begin{multline}\label{specialshapley:case2}
    \phi_i=\frac{1}{n}\underset{S\subset N\backslash\{i\}}{\sum}\frac{1}{\left(
    \begin{array}{l}
    n-1\\|S|
    \end{array}\right)}\\
    \underset{\sigma\in\Pi^{\bar{\lambda}}|_S}{\sum}\frac{s_1!s_2!\cdots s_t!}{|S|!}(U(\sigma(\mathfrak{S}),i)-U(\sigma(\mathfrak{S}))),\\
    \quad s_i=|U_i\bigcap S|.
\end{multline}    
\end{thm}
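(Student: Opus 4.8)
The plan is to derive both formulas directly from the general expression \eqref{partialordinalshapley}, whose only moving part is the inner sum over $\pi\in\Pi|_S$: for a fixed $S$ I would reorganize that sum according to the partition $\lambda$, use the hypothesis of the case in question to observe that the ordinal marginal contribution $U(\pi|_{i,|S|}(\mathfrak{S}))-U(\pi|_S(\mathfrak{S}))$ is constant on each group of the reorganization, and then count the size of a group so that the coefficients in \eqref{specialshapley:case1} and \eqref{specialshapley:case2} fall out.

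For the first case, fix $S\subset N\backslash\{i\}$ and write $s_j=|U_j\cap S|$. I read the hypothesis as saying that $U(\pi|_S(\mathfrak{S}))$ is unchanged when two adjacent clients from different unions are exchanged, hence that it depends on $\pi\in\Pi|_S$ only through the orders induced on the blocks $U_1\cap S,\dots,U_s\cap S$, i.e.\ only through the image of $\pi$ in $\Pi^\lambda|_S\cong\Pi|_{U_1\cap S}\times\cdots\times\Pi|_{U_s\cap S}$; appending $i$ at the end puts $i$ last among the members of its own union, which is again determined by that image, so $U(\pi|_{i,|S|}(\mathfrak{S}))$ is also a function of the image alone. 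Grouping the $|S|!$ elements of $\Pi|_S$ by their image in $\Pi^\lambda|_S$, the summand is the constant $U(\sigma(\mathfrak{S}),i)-U(\sigma(\mathfrak{S}))$ over the group indexed by $\sigma$, and that group --- the interleavings of blocks of sizes $s_1,\dots,s_s$ --- has exactly $|S|!/(s_1!\cdots s_s!)$ elements. Substituting $\sum_{\pi\in\Pi|_S}(\cdot)=\frac{|S|!}{s_1!\cdots s_s!}\sum_{\sigma\in\Pi^\lambda|_S}(\cdot)$ into \eqref{partialordinalshapley} and cancelling the factor $|S|!$ gives \eqref{specialshapley:case1}.

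The second case goes the same way with the two structures interchanged. Here the hypothesis gives that exchanging two clients inside one union leaves $U$ unchanged, so $U(\pi|_S(\mathfrak{S}))$ depends on $\pi$ only through the pattern recording, position by position, which union the client in that position belongs to --- equivalently, only through the coset of $\pi$ in $\Pi^{\bar\lambda}|_S=\Pi|_S/\Pi^\lambda|_S$ --- and, since appending $i$ merely attaches a prescribed slot for $i$'s union at the end, the same holds for $U(\pi|_{i,|S|}(\mathfrak{S}))$. Each coset of $\Pi^\lambda|_S$ has $s_1!\cdots s_s!$ elements (the order of $\Pi^\lambda|_S$) by Lagrange's theorem, so now $\sum_{\pi\in\Pi|_S}(\cdot)=(s_1!\cdots s_s!)\sum_{\sigma\in\Pi^{\bar\lambda}|_S}(\cdot)$, and inserting this into \eqref{partialordinalshapley} produces \eqref{specialshapley:case2}.

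The combinatorial bookkeeping --- a fiber of the ``forget the interleaving'' map having size the multinomial coefficient $|S|!/(s_1!\cdots s_s!)$, a fiber of the ``forget the internal orders'' map being a coset of size $s_1!\cdots s_s!$, and the normalizing constants collapsing accordingly --- is routine. The step I expect to be the real obstacle is pinning down the hypotheses precisely: one must argue that ``$U$ is invariant under the permitted transpositions'' is the same as ``$U$ factors through the block orders'' in the first case and ``$U$ factors through the interleaving pattern'' in the second, which amounts to checking that the permitted transpositions act transitively on each fiber; and one must verify, in each case, that placing client $i$ in the final position is compatible with passing to the relevant quotient.
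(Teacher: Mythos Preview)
Your proposal is correct and follows essentially the same approach as the paper: for a fixed $S$ you group the sum over $\Pi|_S$ into classes on which the marginal contribution is constant, identify those classes with $\Pi^\lambda|_S$ (respectively $\Pi^{\bar\lambda}|_S$), and count each class as $|S|!/(s_1!\cdots s_s!)$ (respectively $s_1!\cdots s_s!$). You are actually more careful than the paper, which does not explicitly check that appending $i$ at the end is compatible with the quotient or that the permitted transpositions act transitively on each fiber.
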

\begin{proof}
  We only prove \eqref{specialshapley:case1} and the proof of \eqref{specialshapley:case2} is similar. Consider a subset $S\subset N$ and $s_i=|U_i\bigcap S|$, then all the permutations which influence the utility function values generate the group $\Pi^\lambda|_S\cong \Pi|_{U_1\bigcap S}\times\Pi|_{U_2\bigcap S}\times\cdots\times\Pi|_{U_s\bigcap S}$ and $|\Pi^\lambda|_S|=s_1!s_2!\cdots s_t!$. Besides, we have $|\Pi|_S/\Pi^\lambda|_S|=\frac{|S|!}{s_1!s_2!\cdots s_t!}$, i.e. the number of the permutations in $\Pi|_S$ sharing equal utility function values. So \eqref{specialshapley:case1} is proved.
\end{proof}
Note that if $\lambda=(1,1,\cdots,1)$, \eqref{specialshapley:case1} is equivalent to the classic Shapley value in \eqref{classicshapley}, and \eqref{specialshapley:case2} is equivalent to the partial ordinal Shapley value in \eqref{partialordinalshapley}. If $\lambda=(n)$, \eqref{specialshapley:case1} is equivalent to the partial ordinal Shapley value in \eqref{partialordinalshapley}, and \eqref{specialshapley:case2} is equivalent to the classic Shapley value in \eqref{classicshapley}.
\section{Methods}
The goal of this section is to give three algorithms for approximating the partial ordinal Shapley value and compare the efficiency.
\subsection{Truncated Monte Carlo}
Truncated Monte Carlo(TMC) is a canonical algorithm for approximating the classic Shapley value, which can be applied to the partial ordinal Shapley value. The motivation of the TMC algorithm is that the range of marginal contributions will be decreased with the increase of cooperative clients, and then the clients will be truncated if they bring less marginal contributions than the truncated factor. We use the algorithm mentioned in \cite{Zou} and the pseudocode is shown in Algorithm \ref{tmcalgo}.
\subsection{Classification Monte Carlo}
Besides the TMC algorithm, Classification Monte Carlo sampling (CMC) gives another way to accelerate the approximation of the partial ordinal Shapley value. We assume the dataset $N$ consists of a couple of classes, and the clients of the same class are similar, i.e. expanding the scale of cooperation by adding clients from the same class provides less marginal contribution than by adding clients from different classes. We select clients from each class with the same proportion in each round, then we calculate the partial ordinal Shapley value via Monte Carlo sampling. The pseudocode is shown in Algorithm \ref{cmcalgo}.

\begin{algorithm}[htb]
  \caption{ The TMC algorithm.}\label{tmcalgo}
  \begin{algorithmic}[1]
    \Require
      Train data $N=\{1,2,\ldots,n\}$, utility function $U$
    \Ensure
      Shapley value $\phi_i$($i=1,2,\ldots,n$)
    \State Initialize $\phi_i=0$($i=1,2,\cdots,n$), $t=0$;
    \While {Convergence criteria not met} \do;
    \State $t:=t+1$;
    \State $\pi^t$: Random permutation in $\Pi$;
    \State $v_0^t=0$;
    \For { $j\in\{1,2,\ldots,n\}$ } \do;
    \If {$|U(\pi^t(N))-v_{j-1}^t|<\mbox{Truncated factor}$}
    \State{$v_j^t=v_{j-1}^t$}
    \Else \State $v_j^t=U(\pi^t[1],\pi^t[2],\cdots,\pi^t[j])$;
    \EndIf
    \EndFor
    \State $\phi_{\pi^t[j]}=\frac{t-1}{t}\phi_{\pi^{t-1}[j]}+\frac{1}{t}(v_j^t-v_{j-1}^t)$;
    \EndWhile
    \State\Return $\phi_1,\phi_2,\cdots,\phi_n$;
  \end{algorithmic}
\end{algorithm}

\begin{algorithm}[htb]
  \caption{ The CMC algorithm.}\label{cmcalgo}
  \begin{algorithmic}[1]
    \Require
      Train data $N=\{1,2,\ldots,n\}$, utility function $U$
    \Ensure
      Shapley value $\phi_i$($i=1,2,\ldots,n$)
    \While {Convergence criteria not met} \do;
    \State Select clients from each class with the same proportion $q$, and the permutation group is $\Gamma$;
    \State $t:=t+1$;
    \State Random permutation $\gamma^t$ in $\Gamma$;
    \State $v_0^t=0$;
    \For {$j\in\{1,2,\ldots,\lfloor qn\rfloor\}$}
    \State $v_j^t=U(\gamma^t[1],\gamma^t[2],\cdots,\gamma^t[j])$;
    \EndFor
        \State $\phi_{\gamma^t[j]}=\frac{t-1}{t}\phi_{\gamma^{t-1}[j]}+\frac{1}{t}(v_j^t-v_{j-1}^t)$;
    \EndWhile
    \State \Return $\phi_1,\phi_2,\cdots,\phi_n$;
  \end{algorithmic}
\end{algorithm}

\subsection{Error analysis of TMC and CMC}
Now we give the error analysis of the TMC and the CMC algorithms. For $1\leq k\leq n$, we define
\begin{equation}
    \phi^{k}_i=\frac{1}{n}\underset{\substack{S\subset N\backslash\{i\}\\|S|=k}}{\sum}\frac{1}{|S|!\left(
    \begin{array}{l}
    n-1\\|S|\end{array}\right)}\underset{\pi\in\Pi|_S}{\sum}(U(\pi|_{i,|S|}(\mathfrak{S}))-U(\pi|_S(\mathfrak{S}))),
\end{equation}
\begin{equation}
    \phi^{(k)}_i=\underset{j=1}{\stackrel{k}{\sum}}\phi_i^k,
\end{equation}
and $\epsilon_k=|\phi_i^{(k)}-\phi_i|$.

    We denote $T$ as the sample size and $\hat{\phi}_i$, $\hat{\phi}_i^{k}$ and $\hat{\phi}_i^{(k)}$ as the corresponding approximation values. 
    
    In the ordinal cooperative game, the marginal contribution range depends both on the client and the cooperative position. In the TMC algorithm, we estimate the error with the condition that each position's bounding of marginal contribution is known. In the CMC algorithm, we estimate the error with the condition that the range of the marginal contribution provided by the fixed client is known. The calculation details are shown in Appendix \ref{appendix}.

{\bf TMC} Let 
\begin{equation}
 r_{max,k}=\max\{r_1,r_2,\cdots,r_k\}
\end{equation}
where $r_j\in\mathbb{R}^+$($j=1,2,\ldots,n$) is the range of the marginal contribution in $j$-th place. In the partial ordinal situation, we assume there exists $c\in\{1,2,\ldots,n\}$ such that $r_c\geq r_{c+1}\geq\cdots\geq r_n$. We denote the truncated factor as $\underset{j=k+1}{\stackrel{n}{\sum}}r_j<\sigma<\underset{j=k}{\stackrel{n}{\sum}}r_j$ with $k\geq c$, i.e. we only consider the permutations for which the client $i$ is in $1,2,\ldots,k$-th place. In this case, we denote the approximation value $\hat{\phi}_i$ as $\hat{\phi}^{(k)}_i$. Without loss of generality, we let $\epsilon\geq\epsilon_k$, and we have
\begin{equation}
    Pr(|\phi_i-\hat{\phi}_i^{(k)}|\geq\epsilon) \leq\underset{j=1}{\stackrel{k}{\sum}}2\exp(-\frac{2m_j(\epsilon-\epsilon_k)^2}{k^2 r_{max,k}^2}).
    \end{equation}
In this equation, $m_j$ is the number of samples where client $i$ is in $j$-th place.

{\bf CMC} Let $N=N_1\bigcup N_2\bigcup\cdots\bigcup N_s$ where $N_i\bigcap N_j=\emptyset$ for $i\neq j$ be the classification of $N$. The range of the marginal contribution of client $i$ is $\delta_i$. The probability of client $i$ in the class $N_j$ being chosen is $q_j$($j=1,2,\ldots,s$). We have
\begin{equation}
    Pr(|\phi_i-\hat\phi_i|\geq\epsilon)\leq\exp(-(2q_j-q_j^2)T h(\frac{\epsilon}{(2q_j-q_j^2)\delta_i}))
    \end{equation}
where
\begin{equation}
    h(x)=(1+x)\ln(1+x)-x.
\end{equation}
\begin{rem}
    Note that the error analysis has been calculated for fixed client $i$ in this paper. In terms of the vector $\Phi=(\phi_1,\phi_2,\cdots,\phi_n)$ in different $l^p$ Banach spaces where $1\leq p\leq\infty$, a general method is given to bound the estimation error. We have
    \begin{equation}
    \begin{aligned}
    Pr(||\Phi-\hat\Phi||_p\geq\epsilon)&=&&1-Pr(\underset{i=1}{\stackrel{n}{\sum}}|\phi_i-\hat{\phi}_i|^p\leq\epsilon^p)\\
    &\leq&&1-Pr(\underset{i=1}{\stackrel{n}{\bigcap}}|\phi_i-\hat{\phi}_i|\leq\frac{\epsilon}{\sqrt[p]{n}})\\
    &=&&Pr(\underset{i=1}{\stackrel{n}{\bigcup}}|\phi_i-\hat{\phi}_i|\geq\frac{\epsilon}{\sqrt[p]{n}})\\
    &\leq&&\underset{i=1}{\stackrel{n}{\sum}}Pr(|\phi_i-\hat{\phi}_i|\geq\frac{\epsilon}{\sqrt[p]{n}})
    \end{aligned}
    \end{equation}
    if $1\leq p<\infty$ and
    \begin{equation}
    \begin{aligned}
    Pr(||\phi-\hat\phi||_p\geq\epsilon)
    &=&&Pr(\underset{i=1}{\stackrel{n}{\bigcup}}|\phi_i-\hat{\phi}_i|\geq\epsilon)\\
    &\leq&&\underset{i=1}{\stackrel{n}{\sum}}Pr(|\phi_i-\hat{\phi}_i|\geq\epsilon)
    \end{aligned}
    \end{equation}
    if $p=\infty$.
\end{rem}
\subsection{Classification Truncated Monte Carlo}
Classification Truncated Monte Carlo sampling(CTMC) is a combination method of TMC and CMC. We choose clients from each class with the same proportion, then we truncated the clients if they bring less marginal contributions than the truncated factor. The pseudocode is shown in Algorithm \ref{ctmcalgo}.
\begin{algorithm}[htb]
  \caption{ the CTMC algorithm.}\label{ctmcalgo}
  \begin{algorithmic}[1]
    \Require
      Train data $N=\{1,2,\ldots,n\}$, utility function $U$
    \Ensure
      Shapley value $\phi_i$($i=1,2,\ldots,n$)
    \While {Convergence criteria not met} \do;
    \State Select clients from each class with the same proportion $q$. The selected set is $P$ and the permutation group is $\Gamma$;
    \State $t:=t+1$;
    \State $\gamma^t$: Random permutation in $\Gamma$;
    \State $v_0^t=0$;
    \For { $j\in\{1,2,\ldots,\lfloor qn\rfloor\}$ } \do;
    \If {$|U(\gamma^t(P))-v_{j-1}^t|<\mbox{Truncated factor}$}
    \State{$v_j^t=v_{j-1}^t$}
    \Else \State $v_j^t=U(\gamma^t[1],\gamma^t[2],\cdots,\gamma^t[j])$;
    \EndIf
    \EndFor
    \State $\phi_{\gamma^t[j]}=\frac{t-1}{t}\phi_{\gamma^{t-1}[j]}+\frac{1}{t}(v_j^t-v_{j-1}^t)$;
    \EndWhile
    \State \Return $\phi_1,\phi_2,\cdots,\phi_n$;
  \end{algorithmic}
\end{algorithm}
\section{Experiment}

In this paper, we utilize the Wine, Cancer, and Adult datasets to evaluate the performance of our proposed method. These datasets are commonly used benchmarks for classification tasks in machine learning and have been extensively studied in previous research.

In order to evaluate the effectiveness of our method, we set up two experimental settings in this section. The first experiment involves evaluating each dataset using the raw data without any modifications or additions. The second experiment involves introducing partial label noise to the datasets and observing the detection of these noise samples. Partial label noise occurs when some of the labels in the dataset are incorrect, which can occur due to human error or faulty measurement equipment. This is a common problem in real-world datasets, and it is important to evaluate the robustness of our model to these types of errors. The CPU we use is the Intel(R)Xeon(R)Silver4108 CPU.

{\bf Datasets:} We consider {\bf Wine}, {\bf Cancer}, and {\bf Adult} datasets. The {\bf Wine} dataset contains 178 samples and 3 classifications, each with a capacity of 59, 71, and 48. Each sample has 13 features. We evaluate half of the data points and use 49 data points to assess the performance of classifiers. The last 40 held-out data points are used to evaluate the final data valuation performance, where we plot performance as a function of the amount of data deleted. 

The {\bf Cancer} dataset contains 286 samples, of which 201 are of one class and 85 are of another. Each sample has nine features. We evaluate half of the data points and use 43 data points to assess the performance of classifiers. The last 100 held-out data points are used to evaluate the final data valuation performance. 

The {\bf Adult} dataset contains 48,842 samples, each with 15 features. We followed \cite{yoon2020data}, evaluate 200 data points, and use 200 data points to assess the value of the data. The remaining data points are held-out data points, where they are used to evaluate the final data valuation performance.

{\bf Classifier:} We use the logistic regression algorithm with liblinear for quantifying the value of the subset. For simulating the ordinal scenarios, we give an extra sample weight in position $i$ as shown in Figure \ref{weight}, i.e.
\begin{equation}
    W_i=\frac{n}{\sqrt{2\pi}\sigma}\exp(-\frac{(i-\mu)^2}{2\sigma^2})
\end{equation}
where $\mu=\frac{n-1}{2}$ and $\sigma=\frac{n-1}{6}$.
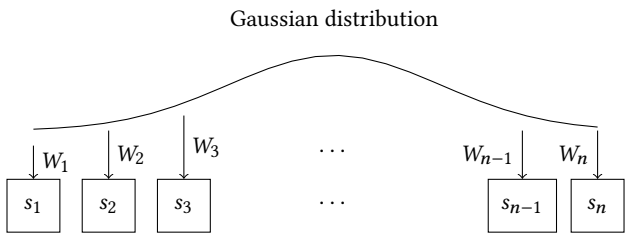
\begin{figure}
\begin{tikzpicture}
\tikzset{
box/.style ={
rectangle, 
minimum width =20pt,
minimum height =20pt, 
inner sep=5pt, 
draw=black 
}}
\tikzset{
boxx/.style ={
rectangle, 
minimum width =50pt,
minimum height =20pt, 
inner sep=5pt, 
draw=white 
}}
\draw[domain =-4:3.5] plot (\x ,{ exp(-(0.5*\x)^2)});
\node[boxx](8) at(0,1.5){Gaussian distribution};
\node[box] (1) at(-4,-1){$s_1$};
\node[box] (2) at(-3,-1){$s_2$};
\node[box] (3) at(-2,-1){$s_3$};
\draw[->] (-4,-0.2) --node[right]{$W_1$}(1);
\node[boxx] (4) at(0,-1){$\cdots$};
\node[boxx] (5) at(0,-0.3){$\cdots$};
\draw[->] (-3,0) --node[right]{$W_2$}(2);
\draw[->] (-2,0.2) --node[right]{$W_3$}(3);
\node[box] (6) at(2.5,-1){$s_{n-1}$};
\node[box] (7) at(3.5,-1){$s_n$};
\draw[->] (2.5,-0) --node[left]{$W_{n-1}$}(6);
\draw[->] (3.5,-0) --node[left]{$W_n$}(7);
\end{tikzpicture}
\caption{Weight in ordinal scenarios}\label{weight}
\end{figure}

\begin{table*}[t]
  \caption{Average AUC(Up to $50\%$ data removed). Within a consistent experimental setting, a smaller area signifies better performance. On a fixed dataset, the area corresponding to noisy data is smaller than that of the raw data. This phenomenon can be attributed to the inferior performance of classifiers trained on noisy datasets compared to those trained on raw datasets.}
  \label{tab:area}
  \begin{tabular}{cccl}
    \toprule
    Dataset & TMC(Raw/Noisy) & CMC(Raw/Noisy) & CTMC(Raw/Noisy) \\
    \midrule
    Wine &8.51/6.42 & 8.29/6.38 & 8.41/6.32 \\
    Cancer  & 9.37/6.60 & 9.37/6.62 & 9.35/6.62 \\
    Adult & 6.64/4.81 & 6.60/4.87 & 6.66/4.83 \\
    \bottomrule
  \end{tabular}
\end{table*}

{\bf Evaluating methods}
We use two benchmarks, evaluating by removing high-value data points and noisy-label detection. We repeat the experiment five times and report the average values.

{\bf Experiment details} During each evaluation of the subset performance, the classifier is reinitialized and trained until convergence, with the accuracy of the resulting classifier on held-out data serving as the value of the subset. To trade off algorithm performance and computational cost, a ratio of 0.8 is chosen for both the CMC and CTMC algorithms. Additionally, a Truncated factor of 0.05 is consistently used for all settings.

\subsection{Removing high-value data points}\label{subsec:removinghighvalue}
The removal of high-value samples can significantly impact the performance of classifiers, thus high-value sample removal experiments are often used as an evaluation method for data pricing. In our study, we conducted high-value sample removal experiments on the original dataset.

We remove data points from high-value to low-value. After removing data points we retrain the classifier with the remaining data points and evaluate the classifier's performance on the held-out data. Results of removing high-value data points experiment on three datasets are shown in Figure \ref{fig:removing}, and the area under the curve(AUC) labeled with Raw are shown in Table \ref{tab:area}. As the sample size decreased, each algorithm shows a significant performance degradation. The performance among the three algorithms is similar. We also show the training time of three algorithms on three datasets in Table \ref{tab:time}(labeled with Raw). It is worth noting that, the CTMC algorithm is faster than the TMC and the CMC algorithms.  The comparison of the improvement rate of the CTMC algorithm over the TMC algorithm on the original dataset is displayed in Table \ref{tab:time}. We refer to the specific details of the evaluation method presented in \cite{Zou}.

\begin{figure*}
  \centering
  
  \subfigure[wine]{
		\includegraphics[width=0.32\linewidth]{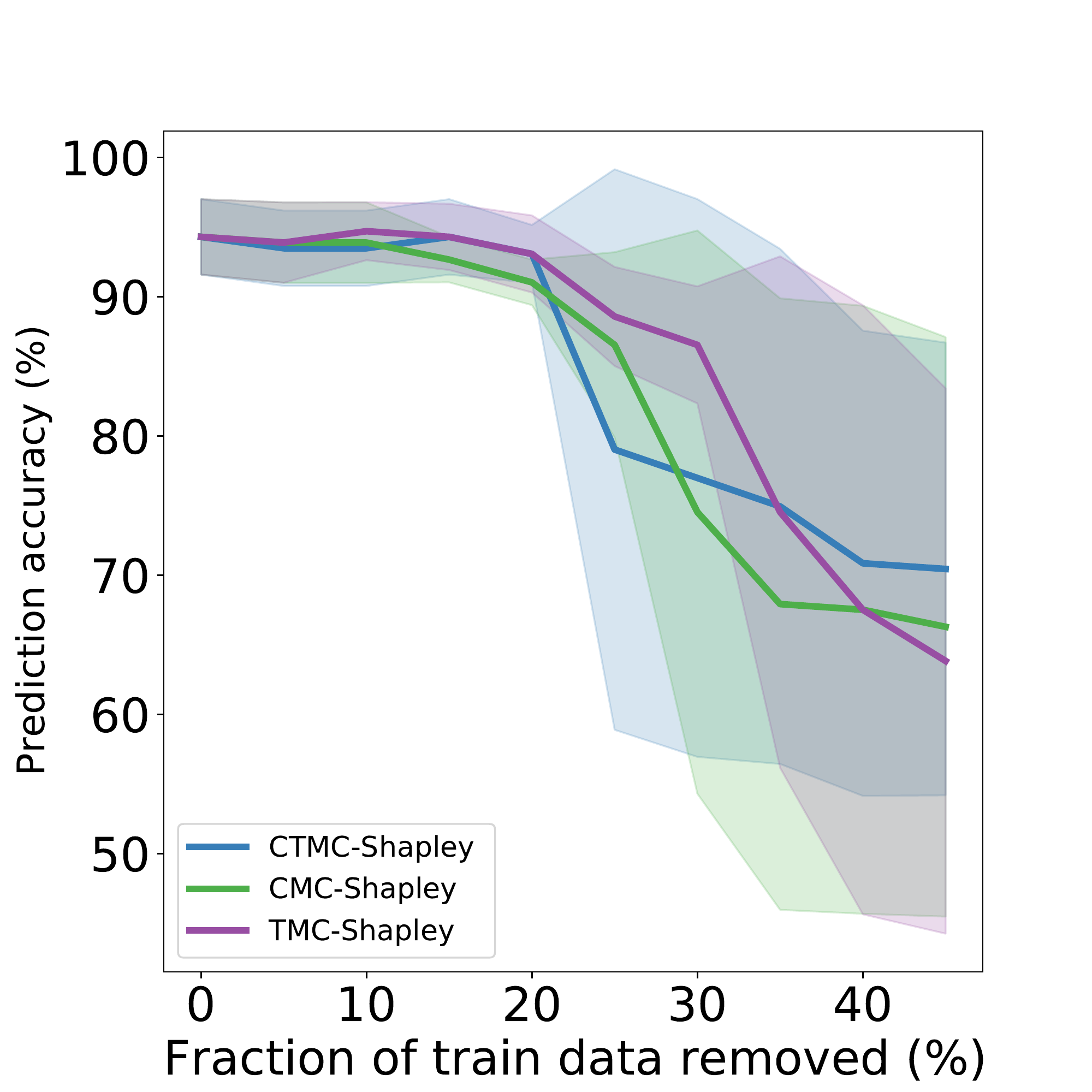}}
  \subfigure[cancer]{
		\includegraphics[width=0.32\linewidth]{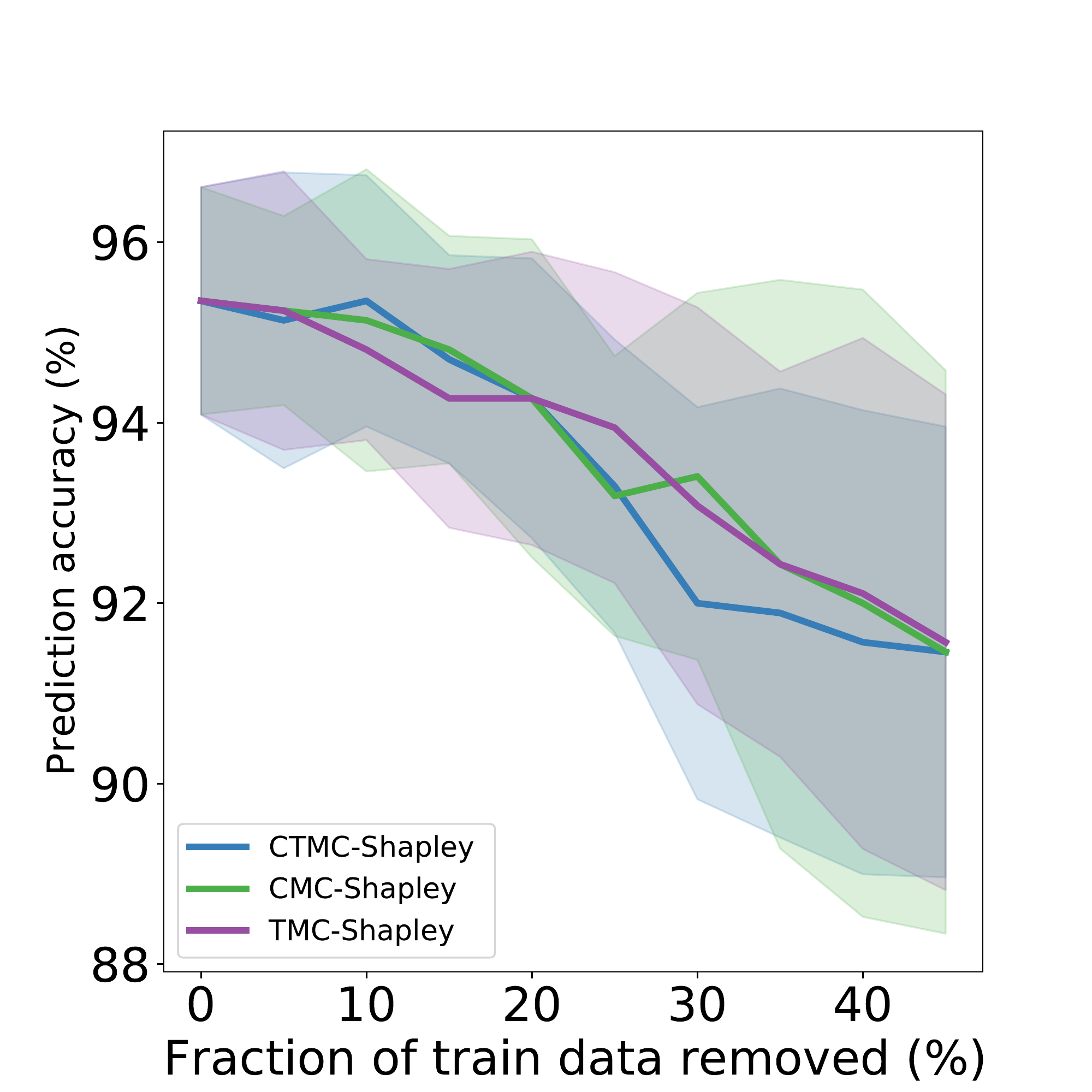}}
  \subfigure[adult]{
		\includegraphics[width=0.32\linewidth]{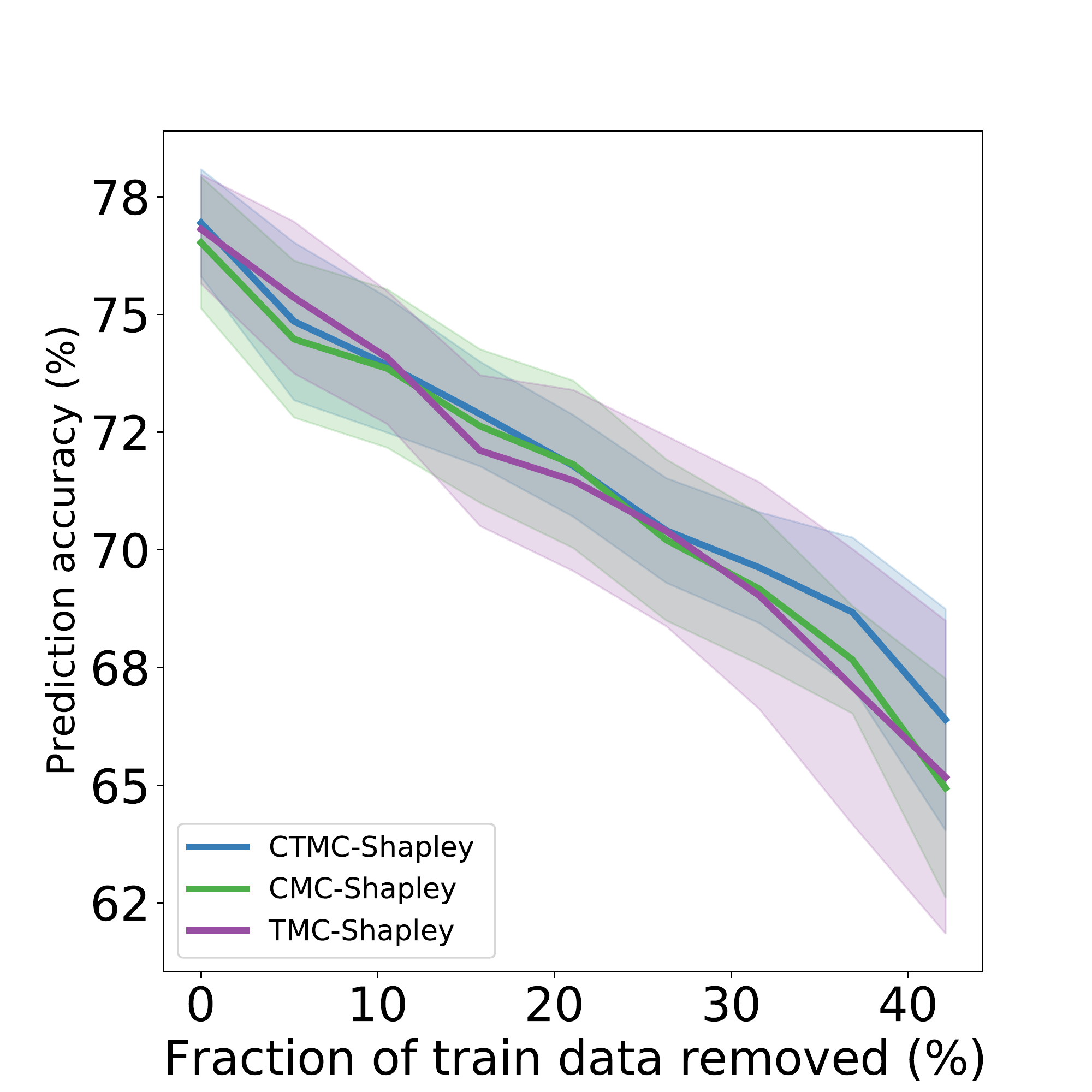}}

  \caption{Results for removing high-value data points experiment on three datasets. In this figure, the lines represent the mean values and the shadows correspond to the standard deviation.}
  \label{fig:removing}
\end{figure*}

\subsection{Noisy-label detection}
In some scenarios, there might be some mislabeled data points in the dataset, so a method that can distinguish between clean and mislabeled data is necessary. Data valuation is a canonical method for recognizing mislabeled data points since mislabeled data points lead to a lower data value. Compare to clean data, mislabeled data has a negative impact on model performance. Then we can obtain the model with better performance by deleting the mislabeled data. 

We synthesize the noisy dataset by mislabeling $20\%$ of the data. Three algorithms show similar performance and deleting data from low-value samples causes the performance of the model to improve and then degrade. We still show in Figure \ref{fig:detection} how the performance of the classifier changes with the proportion of deleted samples. The least represents deletion from low-value samples to high-value samples, and the most represents deletion from high-value samples to low-value samples. Moreover, the AUC(labeled with Noisy) are shown in Table \ref{tab:area}. Similar to Subsection \ref{subsec:removinghighvalue}, the three algorithms show consistent performance. We also show the training time of three algorithms on three datasets in Table \ref{tab:time}(labeled with Noisy). Except that the CMC and the CTMC algorithms have similar training time in Wine dataset, the CTMC algorithm is faster than the TMC and the CMC algorithms in all of the other situations. The improvement rate of the CTMC algorithm over the TMC algorithm on the noisy dataset is presented in Table \ref{tab:time}. On noisy datasets, the CTMC algorithm demonstrates a higher reduction in computational costs than the TMC algorithm as compared to original datasets. Notably, it achieves a speedup of $17.28\%$ and $18.79\%$ on Wine and Adult datasets, respectively.

\begin{figure*}
  \centering
    
	\subfigure[wine]{
		\includegraphics[width=0.32\linewidth]{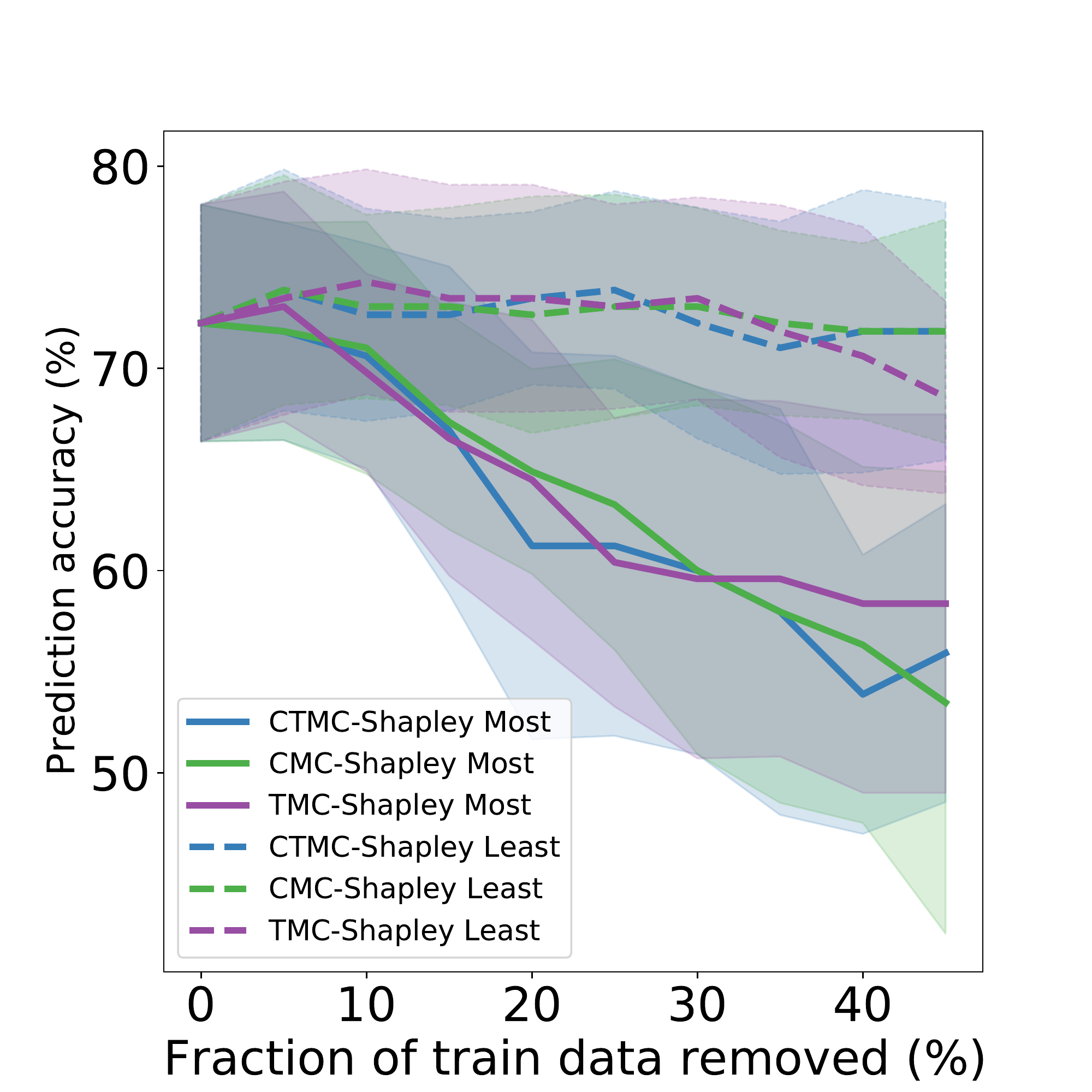}}
    \subfigure[cancer]{
		\includegraphics[width=0.32\linewidth]{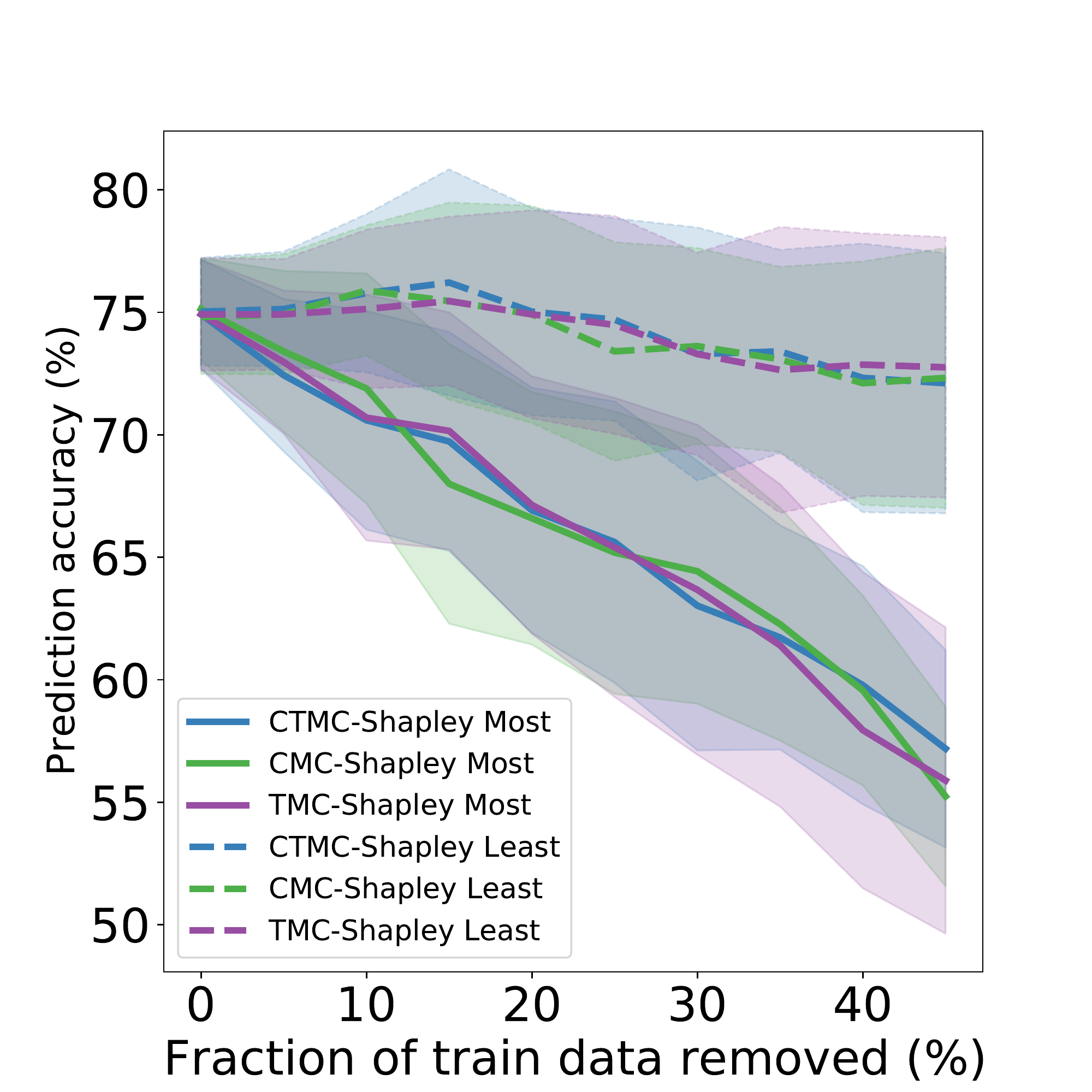}}
    \subfigure[adult]{
		\includegraphics[width=0.32\linewidth]{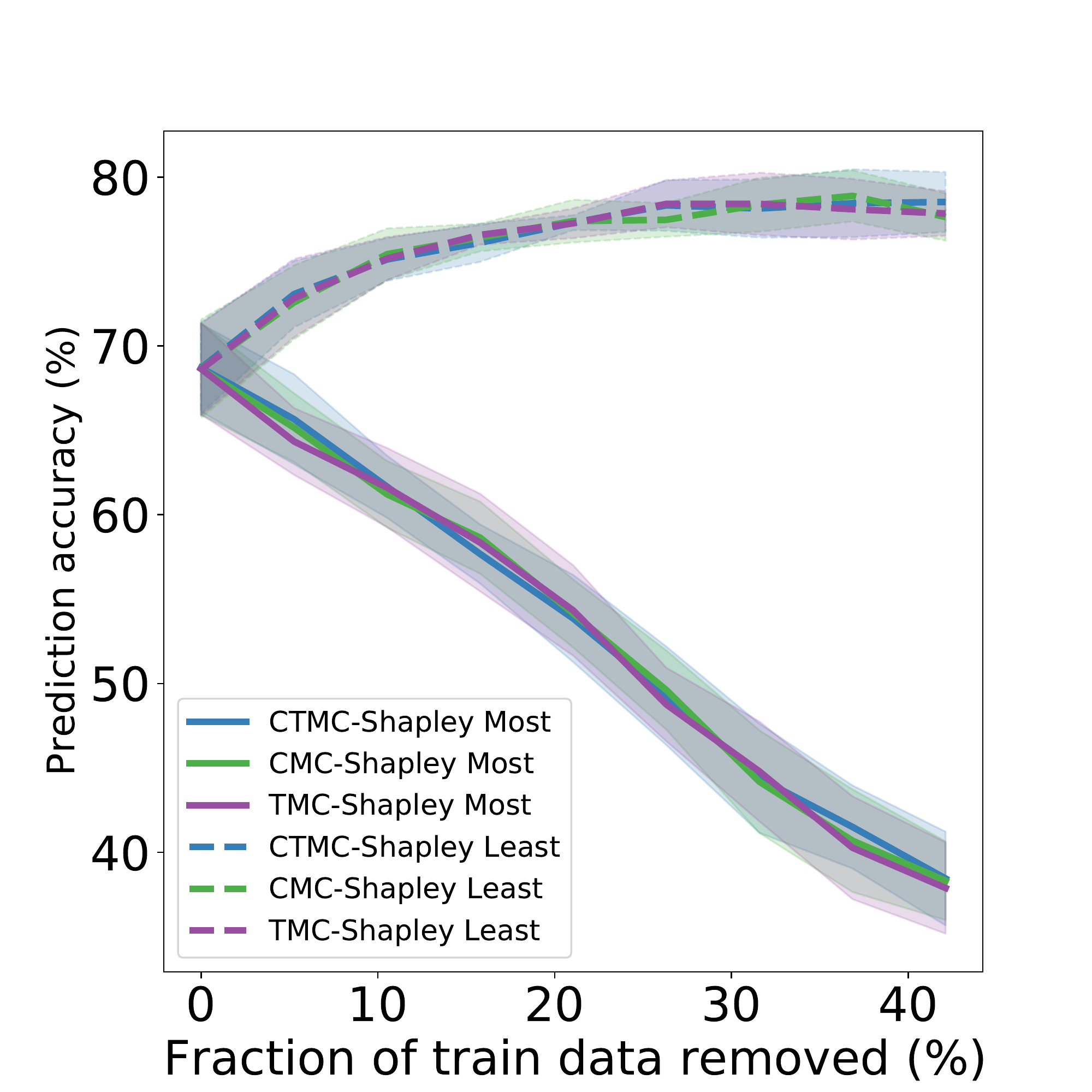}}

  \caption{Results for noisy-label detection experiment on three datasets. In this figure, the lines represent the mean values and the shadows correspond to the standard deviation.}
  \label{fig:detection}
\end{figure*}

\begin{table*}[t]
  \caption{Average training time(Seconds). On a fixed dataset, the training time for noisy data is slower than that for the raw data. The reason is that noisy data leads to the inferior performance of classifiers.}
  \label{tab:time}
  \begin{tabular}{ccccl}
    \toprule
    Dataset & TMC(Raw/Noisy) & CMC(Raw/Noisy) & CTMC(Raw/Noisy) & Time reduce($\%$)\\
    \midrule
    Wine &167.97/376.62 & 203.17/310.41 & 148.05/311.54 & 11.86/17.28 \\
    Cancer  & 1214.88/3643.38 & 2560.34/4750.74 & 1142.45/3250.64 & 5.96/10.78\\
    Adult & 7484.78/13199.39 & 8051.50/12500.36 & 6861.82/10719.87 & 9.32/18.79\\
    \bottomrule
  \end{tabular}
\end{table*}

\section{Conclusion}
This work develops the classic Shapley value to the partial ordinal Shapley value and extends the TMC algorithm to the ordinal setting. Besides, the CMC and the CTMC algorithms are provided based on the assumption that data points in the same class bring similar information. In Wine noisy dataset, both the CMC and the CTMC algorithms are faster than the TMC algorithm where the CMC and the CTMC algorithms have similar training time. In other situations, the CTMC algorithm is faster than the TMC and the CMC algorithms.

%\clearpage

\begin{appendices}
   
\section{Appendix: Calculation of error analysis}\label{appendix}
The goal of this appendix is to provide the error estimations of the TMC and the CMC algorithms. We estimate the error for fixed client $i$. If it does not create ambiguities, we will allow the abuse of notation of denoting $\phi_i$ as $\phi$. 

{\bf TMC algorithm}

The method we use in this section is similar to the method in \cite{Maleki}. Let $m_j$ be the number of samples where client $i$ is in $j$-th place.
By Hoeffding's inequalities, we have
\begin{equation}
    Pr(|\phi^{j}-\hat{\phi}^{j}|\geq\frac{\epsilon}{k})\leq2\exp(-\frac{2m_j\epsilon^2}{k^2 r_j^2}).
    \end{equation}
    Sum over all the case of $j\leq k$, we have
    \begin{equation}
    \begin{aligned}
    Pr(|\phi^{(k)}-\hat{\phi}^{(k)}|\geq\epsilon)&\leq&& \underset{j=1}{\stackrel{k}{\sum}}Pr(|\phi^j-\hat{\phi}^j|\geq\frac{\epsilon}{k})\\
    &\leq&& \underset{j=1}{\stackrel{k}{\sum}}2\exp(-\frac{2m_j\epsilon^2}{k^2 r_j^2}).
    \end{aligned}
    \end{equation}
     Since
    \begin{equation}
        |\phi-\hat\phi^{(k)}|\leq|\phi^{(k)}-\hat\phi^{(k)}|+|\phi-\phi^{(k)}|\leq|\phi^{(k)}-\hat\phi^{(k)}|+\epsilon_k
    \end{equation}
    where we assume $\epsilon\geq\epsilon_k$, 
    we have
    \begin{equation}\label{tmcerror}
    \begin{aligned}
        Pr(|\phi-\hat{\phi}^{(k)}|\geq\epsilon)&\leq&& Pr(|\phi^{(k)}-\hat{\phi}^{(k)}|\geq\epsilon-\epsilon_k)\\
        &\leq&&\underset{j=1}{\stackrel{k}{\sum}}2\exp(-\frac{2m_j(\epsilon-\epsilon_k)^2}{k^2 r_j^2})\\
        &\leq&&\underset{j=1}{\stackrel{k}{\sum}}2\exp(-\frac{2m_j(\epsilon-\epsilon_k)^2}{k^2 r_{max,k}^2}).
    \end{aligned}
    \end{equation}
    \begin{rem}
    We consider estimating errors with known boundings of utility functions of the same size. The classic Shapley value situation is calculated in \cite{Maleki} where the boundings are independent of the length of $S$. In this paper, we consider $a_k,b_k\in\mathbb{R}^+$ with $k=1,2,\ldots,n$ such that
    \begin{equation}
    a_k|S|\leq U(\pi|_S(\mathfrak{S}))\leq b_k|S|,|S|=k,\pi|_S\in\Pi|_S.
    \end{equation}
    The range of marginal contribution in position $k+1$ is
    \begin{equation}\label{tmcerrorstrata}
    r_{k+1}
    \leq(b_{k+1}-a_k)k+b_{k+1}-(a_{k+1}-b_k)k-a_{k+1}.
    \end{equation}
    Then the boundings can be calculated using the same method as the former situation, i.e. the range of the position $j$ in \eqref{tmcerror} can be replaced by \eqref{tmcerrorstrata} after renaming.
    
    Moreover, if we add another assumption that the marginal contribution is non-negative, the range of the marginal contribution in position $k+1$ is
    \begin{equation}
    r_{k+1}\leq(b_{k+1}-a_k)k+b_{k+1}-c_{k+1}
    \end{equation}
    where
    \begin{equation}
    c_{k+1}=\max\{(a_{k+1}-b_k)k+a_{k+1},0\}.
    \end{equation}
    
    \end{rem}

{\bf CMC algorithm} 

    The method we use is similar to \cite{jia2019towards} where we use Bennett inequalities\cite{Bennett} for error estimation. The principle underlying the error analysis of the CMC algorithm is that, since only a subset of data is selected from each class at each round, each data point within the dataset has only a certain probability of being chosen. By utilizing the law of total variance, we can narrow the range of variance and thereby achieve the goal of reducing the scope of error estimation.

    We denote $\Delta$ be the indicator of whether the client $i$ has been chosen or not, i.e.
    \begin{equation}
        Pr(\Delta=1)=q_j,\quad Pr(\Delta=0)=1-q_j.
    \end{equation}
    We have
    \begin{equation}
    Var(\delta_i)=E[Var(\delta_i)]+Var(E[\delta_i]),
    \end{equation}
    and
    \begin{multline}
    E[Var(\delta_i)]=(1-q_j)E[Var(\delta_i)|\delta_i=0]+q_jE[Var(\delta_i)|\delta_i\neq0]\\
    \leq q_j\delta_i^2.
    \end{multline}
    Since
    \begin{multline}
    Var(E[\delta_i])=(1-q_j)(E[\delta_i|\delta_i=0]-E[\delta_i])^2\\
    +q_j(E[\delta_i|\delta_i\neq0]-E[\delta_i])^2
    \end{multline}
    where
    \begin{equation}
    E[\delta_i]=(1-q_j)E[\delta_i=0]+q_jE[\delta_i\neq0]
    =q_jE[\delta_i\neq0],
    \end{equation}
    we have
    \begin{equation}
        Var(E[\delta_i])=q_j(1-q_j)E[\delta_i\neq0]\leq q_j(1-q_j)\delta_i^2.
    \end{equation}
    It leads to
    \begin{equation}
    Var[\delta_i]\leq(2q_j-q_j^2)\delta_i^2.
    \end{equation}
    Using Bennett inequalities, we have
    \begin{equation}
    Pr(|\phi-\hat\phi|\geq\epsilon)
    \leq\exp(-(2q_j-q_j^2)T h(\frac{\epsilon}{(2q_j-q_j^2)\delta_i}))
    \end{equation}
where
\begin{equation}
    h(x)=(1+x)\ln(1+x)-x.
\end{equation}

\end{appendices}

\balance

\clearpage
\bibliographystyle{ACM-Reference-Format}
\bibliography{sample}
\end{document}